\documentclass[conference]{IEEEtran}
\IEEEoverridecommandlockouts
\usepackage{cite}
\usepackage{amsmath,amssymb,amsfonts}
\usepackage{bbm}
\usepackage{algorithm}
\usepackage{algpseudocode}
\usepackage{graphicx}
\usepackage{textcomp}
\usepackage{xcolor}
\usepackage{amsthm}
\usepackage{booktabs}
\usepackage{caption}
\usepackage{subcaption}
\usepackage{mathabx}

\newtheorem{theorem}{Theorem}
\newtheorem{lemma}{Lemma}
\newtheorem{remark}{Remark}

\newcommand{\red}[1]{\textcolor{red}{#1}}
\newcommand{\blue}[1]{\textcolor{blue}{#1}}
\newcommand{\orange}[1]{\textcolor{orange}{#1}}
\def\BibTeX{{\rm B\kern-.05em{\sc i\kern-.025em b}\kern-.08em
    T\kern-.1667em\lower.7ex\hbox{E}\kern-.125emX}}
\begin{document}

\title{Bayesian Learning in Episodic Zero-Sum Games
}

\author{Chang-Wei Yueh, Andy Zhao, Ashutosh Nayyar, and Rahul Jain
\thanks{The authors are with the Department of Electrical and Computer Engineering at the University of Southern California (email: \{cyueh, zhaozeha, ashutosn, rahul.jain\}@usc.edu).}}

\maketitle

\begin{abstract}
 We study Bayesian learning in episodic, finite-horizon  zero-sum Markov games with unknown transition and reward models. We investigate a posterior algorithm in which each player maintains a Bayesian posterior over the game model, independently samples a game model at the beginning of each episode, and computes an equilibrium policy  for the sampled model. We analyze two settings: (i) Both players use the posterior sampling algorithm, and (ii) Only one player uses posterior sampling while the opponent follows an arbitrary learning algorithm. In each setting, we provide guarantees on the expected regret  of the posterior sampling agent. Our notion of regret compares the expected total reward of the learning agent against the expected total reward under equilibrium policies of the true game. Our main theoretical result is an expected regret bound for the posterior sampling agent of order $\mathcal{O}(HS\sqrt{ABHK\log(SABHK)})$ where $K$ is the number of episodes, $H$ is the episode length, $S$ is the number of states, and $A,B$ are the action space sizes of the two players. Experiments in a grid-world predator--prey domain  illustrate the sublinear regret scaling and show that posterior sampling competes favorably with a fictitious-play baseline.


\end{abstract}

\begin{IEEEkeywords}
Reinforcement Learning, Game Theory
\end{IEEEkeywords}

\section{Introduction}
Markov games (also known as stochastic games \cite{shapley1953stochastic}) provide a fundamental framework for multi-agent systems by extending Markov decision processes (MDPs) to both competitive and cooperative multi-agent settings. In particular, two-player zero-sum Markov games model adversarial interactions where one agent’s gain is the other’s loss. If the game model is known to both players, Nash equilibrium strategies for a  zero-sum Markov game with a finite time horizon can be computed using a min-max dynamic program \cite{bacsar1998dynamic}. Our focus is on a multi-agent reinforcement learning  problem where  two players  play a zero-sum Markov game with unknown dynamics and rewards over multiple episodes of finite length.  

Single-agent reinforcement learning (RL) has been extensively studied in the prior literature and a variety of learning algorithms have been designed and analyzed \cite{jaksch2010near, osband2013more, gopalan2015thompson, ouyang2017learning, Azar17, Agrawal17}.  The learning problem becomes more challenging in the presence of  multiple agents that are learning independently. This is because in addition to learning the underlying game model, each agent also needs to figure out how best to respond to the other agent's policy that may be changing over time.   The best-response problem can be somewhat mitigated  by focusing on minimax policies that try to optimize the worst-case performance for an agent. For zero-sum games, minimax policies  are in fact Nash equilibrium policies and therefore learning a minimax policy for the true game effectively amounts to learning an equilibrium policy.

In this paper, we investigate  a posterior sampling (or Thompson sampling \cite{thompson1933likelihood}) based learning algorithm for agents in a two-player zero-sum Markov game. A posterior sampling based learning algorithm keeps track of the Bayesian posterior on the model of the Markov game. The algorithm periodically samples a model from this posterior distribution and plays an equilibrium (i.e., minimax) policy for the sampled model. We consider two cases - a) when both players use  posterior sampling algorithm with independent sampling and b) when one player uses the posterior sampling algorithm and the other uses an arbitrary learning algorithm.  In each setting, we provide guarantees on the expected regret  of the posterior sampling agent. Our notion of regret compares the expected total reward of the learning agent against the expected total reward under equilibrium policies of the true game.
 Our main theoretical result is an expected regret bound for the posterior sampling agent of order
\[
\mathcal{O}\left(HS\sqrt{ABHK\log(SABHK)}\right)
\]
where $K$ is the number of episodes, $H$ is the episode length, $S$ is the size of the state space, and $A$, $B$ are the sizes of the action spaces of the two players. This sublinear regret guarantee implies that as the number of episodes ($K$) grows,  the upper bound on regret-per-episode approaches zero.

\emph{Related Literature:}
A large body of work has focused on the exploration-exploitation tradeoffs in single-agent reinforcement learning. Algorithms based on the principle of {optimism in the face of uncertainty (OFU)}  \cite{AuCeFi02, jaksch2010near,Azar17}  as well as those based on  posterior sampling (PS) have been investigated~\cite{osband2013more, ouyang2017learning,gopalan2015thompson}. While OFU-based approaches involve construction of confidence sets for unknown system parameters and finding optimistic parameter values from these sets, PS-based approaches work with sampled parameter values drawn from the   posterior distribution on the unknown parameters. PS-based approaches balance exploration and exploitation by  periodically  sampling from the posterior distribution (exploration) and   then acting optimally with respect to the sampled parameter values (exploitation). PS-based approaches for single-agent RL are generally computationally simpler and achieve good empirical performance  \cite{osband2016why, ouyang2017learning,osband2013more}.


The problem of  finding equilibrium strategies in stochastic games with known dynamics and reward models has also received significant attention in the literature
 \cite{shapley1953stochastic}, \cite{Lago02}, \cite{Pero15,bacsar1998dynamic}.  More relevant for us is the literature on multi-agent reinforcement learning  in Markov games with unknown game models. One line of this work focuses on the offline setting where the learning procedures of different players are coordinated in order to find a Nash equilibrium \cite{sidford20a}, \cite{Zhang20},\cite{Bai20},\cite{Liu21},\cite{wei21a},\cite{chen22d, Jin22, Xie20a}. In online settings, on the other hand, players must learn independently, and the focus is on minimizing the regret with respect to the Nash equilibrium value \cite{Wei17}, \cite{jahromi2024bayesian}, \cite{Xie20a}, \cite{Tian21b}, \cite{Jin22}. In particular, \cite{Wei17} and \cite{jahromi2024bayesian} considered an infinite-horizon Markov game and analyze regret under a finite diameter  assumption about the Markov game, whereas our work  deals with finite horizon episodic games with no diameter or ergodicity-style assumption.  \cite{Tian21b} considers a setting where the opponent's action is not observable, which is a weaker information requirement than in our setting and therefore has an higher order term in regret (depends on $K^{2/3}$ while ours depends on $\sqrt{K}$). 
 The algorithms in  \cite{Jin22, Xie20a} are based on the OFU principle while we adopt a posterior sampling approach. OFU-based algorithms are computationally more demanding as they require a subroutine to find the optimistic parameters/value functions within a confidence set.  Further, the reward model in \cite{Xie20a} is a linear function of a known feature map, whereas our model allows for stochastic rewards with unknown distributions. Finally, \cite{Xie20a} gives a high probability regret bound that depends on $\sqrt K$ and $H^2$, whereas we have an expected regret bound that depends on $\sqrt K$ and $H^{1.5}$.



\emph{Notation:} For a set $X$,  $\Delta_X$ denotes the set of all probability distributions on $X$. For a positive integer $H$, $[H]$ denotes the set $\{1,\cdots,H\}$. $\mathbb{R}$ is the set of real numbers. $a \sim p$ indicates that $a$ is randomly generated according to the probability distribution $p$. $\lceil x \rceil$ denotes the smallest integer greater than or equal to $x$.
\section{System Model}
We consider a   two-player zero-sum Markov game  $M = (\mathcal{S}, \mathcal{A}, \mathcal{B},   \theta, R, H, \rho)$, where $\mathcal{S}$ is the finite state space, $\mathcal{A}$ and $\mathcal{B}$ are  finite action spaces for player 1 and player 2 respectively,  $H $ is the finite time horizon, and $\rho$ is the probability distribution of initial state. $\theta$ denotes the transition kernel with $\theta (s'|s,a,b)$ being the probability of transitioning  to state $s'$ from current state $s$ when actions $a$ and $b$ are selected by the two players. $R$ denotes the reward model  with $R(s,a,b)$ being the  probability distribution  of player 1's reward when the current state is $s$ and actions $a$ and $b$ are selected by the two players. We assume that the support of $R(s,a,b)$ is $[-1,1]$.  We use $\overline{R}^M(s,a,b)$ to denote the expected value of the distribution $R(s,a,b)$ of Markov game $M$. 

 We will consider the setting where the transition kernel $\theta$ and the reward model $R$ are unknown to both players but the rest of the Markov game model is known to both players. We will henceforth identify the Markov game $M$ by its transition kernel $\theta$ and the reward model $R$, i.e., $M = (\theta, R)$.

We consider an episodic game setting where the two players play the Markov game $M$ over multiple episodes with each episode having $H$ time steps.  The $k$th episode begins at time $t_k = (k-1)H + 1, $ for $k=1,2,\ldots$. At the beginning of the $k$th episode, the initial state is chosen according to the probability distribution $\rho$. At each time $t$, the following sequence of events occurs: (i) both players observe the current state $s_t$ and the previous actions $a_{t-1},b_{t-1}$, (ii) player 1  selects an action $a_t\in\mathcal{A}$ and player 2 selects an action $b_t\in\mathcal{B}$ simultaneously, (iii) player 1 obtains a  reward $r_t \sim R(s_t,a_t,b_t)$ and player 2 obtains a reward equal to  $-r_t$, (iv) the state transitions to $s_{t+1}\sim \theta(\cdot|s_t,a_t,b_t)$. The goal for player 1  is to maximize the total expected reward (hence we will refer to it as the maximizing player), while player 2's goal is to minimize the total expected reward (we will call it the minimizing player). A policy for player 1 is a function $ \mu:\ \mathcal{S}\times[H]\to\Delta_{\mathcal{A}}$. If player 1 is using  the  policy $\mu$ in the $k$th episode, then $a_t \sim \mu(s_t, t - t_k+1)$ for $t_k \le t < t_{k+1}$.
Similarly, a policy for player 2 is a function $ \nu:\ \mathcal{S}\times[H]\to\Delta_{\mathcal{B}}$. 

\subsection{Dynamic game preliminaries}
Given a Markov game $M$ and the policies $\mu$  and  $\nu$  for the two players, we define value function at step $h$ to be
\[
V^M_{\mu,\nu,h}(s)=\mathbb{E}^{\theta}_{\mu,\nu}\left[\sum_{l=h}^H \overline{R}^M(s_l,a_l,b_l) \Big|s_h=s\right],
\]
 where the expectation is  with respect to the probability distribution on state and action trajectories induced by the  polices of two players and the transition kernel. We also define the total expected reward for policies $\mu$ and $\nu$ in Markov game $M$ as follows:  
 \begin{align}
     J^M_{\mu,\nu}&
     =\sum_{s \in \mathcal{S}}\rho(s)V^M_{\mu,\nu,1}(s).
 \end{align}
Further, for the Markov game $M$ and  players' policies $\mu$ and $\nu$, we define the Bellman operator at step $h$  as follows: for any function $V: \mathcal{S} \to \mathbb{R}$, we have
\begin{align}
&\mathcal{T}^M_{\mu,\nu,h}V(s)= \notag \\
&\mathbb{E}_{a \sim \mu(s,h),b \sim \nu(s,h)}\left[\overline{R}^M(s,a,b)
+\sum_{s'\in\mathcal{S}}\theta(s'|s, a,b)V(s')\right]. \label{eq:bellman}
\end{align}

\begin{lemma} (Dynamic programming equation)
\[
V^{M}_{\mu,\nu,h}=\mathcal{T}^{M}_{\mu,\nu,h}V^{M}_{\mu,\nu,h+1}, 
\]
where $ V^{M}_{\mu,\nu,H+1}(s)=0\quad \forall s\in\mathcal{S}$.
\end{lemma}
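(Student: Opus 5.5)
The plan is to derive the identity directly from the definition of $V^M_{\mu,\nu,h}$: peel off the first term of the sum, then handle the remaining sum with the tower property of conditional expectation and the Markov property of the trajectory induced by $(\mu,\nu,\theta)$. The first task is to fix the probabilistic structure under $\mathbb{E}^{\theta}_{\mu,\nu}[\,\cdot\mid s_h=s]$. Given $s_h=s$, the actions are drawn independently as $a_h\sim\mu(s,h)$ and $b_h\sim\nu(s,h)$. Then $s_{h+1}\sim\theta(\cdot\mid s,a_h,b_h)$. From step $h+1$ on, actions at step $l$ depend only on $(s_l,l)$ through the Markov policies, and transitions depend only on $(s_l,a_l,b_l)$. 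So the conditional law of $(s_l,a_l,b_l)_{l\ge h+1}$ given $(s_h,a_h,b_h,s_{h+1})$ depends only on $s_{h+1}$. It coincides with the law defining $V^M_{\mu,\nu,h+1}(s_{h+1})$.

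For the terminal case $h=H$, I will take the convention $V^M_{\mu,\nu,H+1}\equiv 0$, which corresponds to the empty sum. The claim then reads $V^M_{\mu,\nu,H}(s)=\mathbb{E}_{a\sim\mu(s,H),b\sim\nu(s,H)}[\overline{R}^M(s,a,b)]$. This is immediate from the definition, since the sum has the single term $l=H$.

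For general $h\le H$, I will write
\[
V^M_{\mu,\nu,h}(s)=\mathbb{E}\big[\overline{R}^M(s_h,a_h,b_h)\mid s_h=s\big]+\mathbb{E}\Big[\textstyle\sum_{l=h+1}^H \overline{R}^M(s_l,a_l,b_l)\,\Big|\, s_h=s\Big].
\]
The first term equals $\mathbb{E}_{a\sim\mu(s,h),b\sim\nu(s,h)}[\overline{R}^M(s,a,b)]$. For the second term, I will condition on $(a_h,b_h,s_{h+1})$ and apply the Markov property from the previous paragraph. The inner conditional expectation is then $V^M_{\mu,\nu,h+1}(s_{h+1})$. Averaging over $s_{h+1}\sim\theta(\cdot\mid s,a_h,b_h)$ gives $\sum_{s'}\theta(s'\mid s,a_h,b_h)V^M_{\mu,\nu,h+1}(s')$. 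Averaging over $(a_h,b_h)$ and combining with the first term reproduces exactly $\mathcal{T}^M_{\mu,\nu,h}V^M_{\mu,\nu,h+1}(s)$, as defined in \eqref{eq:bellman}.

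The only point needing care is the Markov/tower step. I must justify that conditioning on the whole history up to $s_{h+1}$ reduces to conditioning on $s_{h+1}$ alone. I would make this explicit by writing the finite joint probability mass of the trajectory as a product of factors $\mu(a_l\mid s_l,l)\,\nu(b_l\mid s_l,l)\,\theta(s_{l+1}\mid s_l,a_l,b_l)$. Summing over the later variables then factorizes, which is a finite sum manipulation because $\mathcal{S},\mathcal{A},\mathcal{B}$ are finite. One subtlety concerns states $s$ with zero probability under the trajectory. There I would note that the value function is defined by starting the process at $s_h=s$ with the stated kernels. The conditional expectation is therefore well defined for every $s$, and no division by zero arises.
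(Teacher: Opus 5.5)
Your proposal is correct and is essentially the approach the paper relies on: the paper gives no argument of its own and simply cites standard dynamic programming results, and your proof is that standard argument (split off the step-$h$ reward, then apply the tower property and the Markov property of the trajectory under Markov policies). Your product-form factorization of the trajectory law and your remark about defining the conditional expectation at zero-probability states fill in details the paper leaves to the citation.
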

\begin{proof}
    The result follows from  standard dynamic programming arguments \cite{kumar2015stochastic}.
\end{proof}

A policy pair $(\mu^{eq},\nu^{eq})$ is a Nash equilibrium in game $M$ if for all $\mu,\nu$, 
\begin{equation}
    J^M_{\mu,\nu^{eq}}\leq J^M_{\mu^{eq},\nu^{eq}}\leq J^M_{\mu^{eq},\nu}. 
\end{equation}
Given a Markov game $M$, the corresponding equilibrium policies can be obtained by a max-min dynamic program as described in the lemma below.
\begin{lemma} \label{lemma:minmaxdp}
For the Markov game $M$, define equilibrium value functions $V^{M}_{eq,h}$ backward inductively as follows:\\  $V^{M}_{eq,H+1}(s)=0$ and for $h = H, H-1, \ldots, 1,$
\begin{align}\label{eq:mindp}
    V^{M}_{eq,h}(s)=\max_{p \in \Delta_{\mathcal{A}}} \min_{q \in \Delta_{\mathcal{B}}} &\mathbb{E}_{a \sim p,b \sim q} \Big[ \overline{R}^M(s, a, b)\notag \\
    &+ \sum_{s'} \theta(s'| s, a, b) V^{M}_{eq,h+1}(s') \Big]\notag \\
    = \min_{q \in \Delta_{\mathcal{B}}}\max_{p \in \Delta_{\mathcal{A}}} &\mathbb{E}_{a \sim p,b \sim q} \Big[ \overline{R}^M(s, a, b)\notag \\
    &+ \sum_{s'} \theta(s'| s, a, b) V^{M}_{eq,h+1}(s') \Big].
\end{align}
        An equilibrium policy pair  $(\mu^{eq},\nu^{eq})$ is given by:
\begin{align}\label{eq:maxdp}
    \mu^{eq}(s,h) \in \arg\max_{p \in \Delta_{\mathcal{A}}} \min_{q \in \Delta_{\mathcal{B}}} &\mathbb{E}_{a \sim p,b \sim q} \Big[ \overline{R}^M(s, a, b) \notag \\
     &+ \sum_{s'} \theta(s'| s, a, b) V^{M}_{eq,h+1}(s') \Big],
\end{align}
\begin{align}\label{eq:mindp}
    \nu^{eq}(s,h) \in \arg\min_{q \in \Delta_{\mathcal{B}}}\max_{p \in \Delta_{\mathcal{A}}}  &\mathbb{E}_{a \sim p,b \sim q} \Big[ \overline{R}^M(s, a, b) \notag \\
     &+ \sum_{s'} \theta(s'| s, a, b) V^{M}_{eq,h+1}(s') \Big],
\end{align}
\end{lemma}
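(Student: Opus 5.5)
The plan is a backward induction on $h$, using von Neumann's minimax theorem at each stage and the dynamic programming equation (Lemma 1) to pass from stage-wise optimality to optimality of the whole policy.

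\emph{Step 1: equality of max-min and min-max.} For fixed $s$, $h$, and $V^M_{eq,h+1}$, define
\[
Q_h(s,a,b)=\overline{R}^M(s,a,b)+\sum_{s'}\theta(s'|s,a,b)V^M_{eq,h+1}(s').
\]
The objective in \eqref{eq:mindp} is then the bilinear function $p^\top Q_h(s,\cdot,\cdot)q$ on the compact convex sets $\Delta_{\mathcal{A}}\times\Delta_{\mathcal{B}}$. Von Neumann's minimax theorem gives equality of the max-min and min-max values. Continuity and compactness guarantee that the argmax in \eqref{eq:maxdp} and the argmin defining $\nu^{eq}$ are nonempty, so $V^M_{eq,h}$ and $(\mu^{eq},\nu^{eq})$ are well defined for $h=H,\dots,1$. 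Moreover, $(\mu^{eq}(s,h),\nu^{eq}(s,h))$ is a saddle point of the matrix game $Q_h(s,\cdot,\cdot)$ with value $V^M_{eq,h}(s)$.

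\emph{Step 2: value of the equilibrium pair.} By induction downward from $h=H+1$, I will show $V^M_{\mu^{eq},\nu^{eq},h}=V^M_{eq,h}$. Lemma 1 gives
\[
V^M_{\mu^{eq},\nu^{eq},h}=\mathcal{T}^M_{\mu^{eq},\nu^{eq},h}V^M_{\mu^{eq},\nu^{eq},h+1}.
\]
By the inductive hypothesis the right-hand side equals $\mathcal{T}^M_{\mu^{eq},\nu^{eq},h}V^M_{eq,h+1}(s)=\mu^{eq}(s,h)^\top Q_h(s,\cdot,\cdot)\,\nu^{eq}(s,h)$, which is $V^M_{eq,h}(s)$ by the saddle-point property from Step 1.

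\emph{Step 3: the deviation inequalities.} This is the main step. For an arbitrary $\mu$, I will prove by backward induction that $V^M_{\mu,\nu^{eq},h}\le V^M_{eq,h}$ pointwise. Assume the bound holds at $h+1$. Since $\theta\ge 0$, the Bellman operator is monotone, so Lemma 1 gives
\[
V^M_{\mu,\nu^{eq},h}(s)=\mathcal{T}^M_{\mu,\nu^{eq},h}V^M_{\mu,\nu^{eq},h+1}(s)\le \mathcal{T}^M_{\mu,\nu^{eq},h}V^M_{eq,h+1}(s)=\mu(s,h)^\top Q_h(s,\cdot,\cdot)\,\nu^{eq}(s,h).
\]
This last quantity is at most $\max_p p^\top Q_h\,\nu^{eq}(s,h)$. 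Because $\nu^{eq}(s,h)$ attains the min-max, that maximum equals $V^M_{eq,h}(s)$. The symmetric argument shows $V^M_{\mu^{eq},\nu,h}\ge V^M_{eq,h}$ for every $\nu$.

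\emph{Step 4: conclusion.} Average the bounds at $h=1$ against $\rho$ and combine with Step 2. This gives $J^M_{\mu,\nu^{eq}}\le J^M_{\mu^{eq},\nu^{eq}}\le J^M_{\mu^{eq},\nu}$, which is the Nash condition.

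The only delicate points are the appeal to the minimax theorem in Step 1 and the monotonicity of $\mathcal{T}$ used in Step 3. Once these are in place, the induction is routine.
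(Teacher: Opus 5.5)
Your proposal is correct and is essentially the argument the paper defers to in \cite[Chapter 6]{bacsar1998dynamic}: apply the minimax theorem to the stage matrix game $Q_h(s,\cdot,\cdot)$ to get the max-min/min-max equality and a stagewise saddle point. Then use backward induction with Lemma 1 and the monotonicity of $\mathcal{T}^M_{\mu,\nu,h}$ to verify both saddle-point inequalities, average against $\rho$, and note that your deviation step ranges over exactly the Markov policies $\mu:\mathcal{S}\times[H]\to\Delta_{\mathcal{A}}$ (and likewise $\nu$) for which the paper states the Nash condition.
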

The proof of Lemma \ref{lemma:minmaxdp} is based on arguments similar to those used in \cite[Chapter 6]{bacsar1998dynamic} for a discrete-time, finite horizon game.\\
\emph{Note: We will use the notation $DP(M)$ to denote a pair of equilibrium policies obtained using the dynamic program of Lemma \ref{lemma:minmaxdp} for the Markov game $M$.}
\subsection{Learning Algorithms and Regret definition}\label{sec:regret_def}
Let $h_t = (s_1, a_1, b_1,r_1\cdots , s_{t-1}, a_{t-1},b_{t-1},r_{t-1})$ denote the state, actions and reward  history   before time $t$. We assume that both players know $h_t$ at time $t$. A learning algorithm for a player $i$ ($i=1,2$)  is a sequence of mappings $\psi^i_k, k=1,2,\ldots$ where, for each $k$, $\psi^i_k$ takes the history $h_{t_k}$ as input and (possibly randomly) outputs a policy for player $i$ to use in the $k$th episode. 

Let $\mu_k$ and $\nu_k$ denote the policies used by player 1 and player 2, respectively, in the $k$-th episode. 
Let $M^* = (\theta^*, R^*)$ denote the true Markov game and let $(\mu^*,\nu^*)$ be an equilibrium policy pair for the true Markov game $M^*$. Define 
\begin{equation}\label{eq:delta0}
    \Delta_k:=J^{M^*}_{\mu^*,\nu^*}-J^{M^*}_{\mu_k,\nu_k}.
\end{equation}
$\Delta_k$ is the difference between the  expected total reward of the equilibrium policies for the true game and the expected total reward of the  the actual policies used in episode $k$.  
We can now define player 1's regret over $K$ episodes as follows:

\begin{equation}\label{eq:regret_def}
    \mbox{Regret}(K)=\sum_{k=1}^K\Delta_k.
\end{equation}
\begin{remark}
    While we have defined the regret from player 1's perspective, it is easy to see that the zero-sum nature of the game implies that player 2's regret is just negative of player 1's regret. 
\end{remark}
\begin{remark}
    Each step of the dynamic program in Lemma \ref{lemma:minmaxdp} is a minmax optimization problem of a bilinear function. Such problems can be cast as a linear program and solved efficiently \cite[Chapter 2]{bacsar1998dynamic}. In our experiments, we used Clarabel \cite{Clarabel_2024} for solving these linear programs.
\end{remark}

\subsection{Bayesian Framework} \label{sec:Bayes}
We will adopt a Bayesian framework for the true Markov game as described below: 
\begin{enumerate}
    \item $\theta^*$ is a random matrix.
    \item The reward distribution for each state-action tuple comes from a parametrized family of distributions with support in $[-1,1]$.  To be precise, 
let $\mathcal{D} = \{d_{\lambda}: \lambda \in \mathbb{R}^n \}$ be  a parametrized  collection of probability distributions on the real line with support in $[-1,1]$. 
We assume that for each state-action tuple $(s,a,b)$, the reward distribution $R(s,a,b)$ belong to $\mathcal{D}$, i.e. $R(s,a,b) = d_{\lambda^*(s,a,b)}$ for some parameter $\lambda^*(s,a,b)$. Let $\lambda^*$ be the vector consisting of $\lambda^*(s,a,b)$ for all state-action tuples. We assume that the true $\lambda^*$ is a random vector. 
\item  $f_1$ be the joint prior distribution of $\theta^*$ and $\lambda^*$. For brevity, we will refer to the pair $\theta^*,\lambda^*$ as the MDP $M^*$.

\end{enumerate}

Our focus will be on $\mathbb{E}[\mbox{Regret}(K)]$ where the expectation is with respect to the prior distribution on $\theta^*,\lambda^*$ and the distribution  of the policies selected by the players' learning algorithms.


\section{Posterior sampling algorithm}
 We first consider the case where both players use a posterior sampling algorithm as their learning algorithm. This algorithm proceeds as follows for player $i$ ($i=1,2$): the player keeps track of a posterior distribution on $\theta^*, \lambda^*$ based on the observed state-action trajectory. Let $f_k$ denote the player's posterior distribution on $\theta^*, \lambda^*$  at the start of the $k-$th episode. The posterior distribution is updated according to Bayes' rule:  
\begin{align}\label{eq:posterior}
    &f_{k+1}(d\theta, d\lambda)\propto  \notag \\
    &\prod_{t=t_k}^{t_k+h-2} \theta(s_{t+1}|s_{t},a_{t},b_{t})\prod_{t=t_k}^{t_k+h-1}d_{\lambda(s_{t},a_{t},b_{t})}(r_{t}) f_k(d\theta, d\lambda)
\end{align}
Note that both players maintain the same posterior distribution since both have access to the same state-action-reward history.

At the start of the $k$th episode, player $1$ (respectively player $2$) draws a sample $M^1_k=(\theta^1_k,\lambda^1_k)$ (respectively $M^2_k=(\theta^2_k,\lambda^2_k)$) from the posterior distribution $f_k$. The players draw their samples independently of each other. Each player uses its sample to compute an equilibrium policy pair according to the dynamic program of Lemma \ref{lemma:minmaxdp}. That is, player 1 computes
\begin{equation}
    (\mu_k,\tilde{\nu}_k) = DP(M^1_k)
\end{equation}
and uses the policy $\mu_k$ in the $k$th episode while player 2 computes \begin{equation}
    (\tilde{\mu}_k,\nu_k) = DP(M^2_k)
\end{equation}
and uses the policy $\nu_k$ in the $k$th episode. The players' algorithms are summarized below.

\begin{algorithm}[H] 
\caption{Maximizer's (Player 1's) Algorithm}
\label{alg:maxalg}
\begin{algorithmic}[1]
\State Initialize prior distribution $f_1$.

\For {each episode $k = 1, 2, \dots, K$}
\State Sample $M^1_k \sim f_k$.
    \State Compute $(\mu_k,\tilde{\nu}_k) = DP(M^1_k)$ according to Lemma \ref{lemma:minmaxdp}.
    
    \For {each timestep $h = 1, 2, \dots, H$}
        \State Observe $s_h$  and sample action $a_h \sim \mu_k(s_h,h)$. 
        \State Observe $a_h,b_h,r_h$.
  
    \EndFor
    \State Update posterior distribution with the history according to \eqref{eq:posterior}.
\EndFor
\end{algorithmic}
\end{algorithm}

\begin{algorithm}[H]
\caption{Minimizer's (Player 2's) Algorithm}
\begin{algorithmic}[1]
\State Initialize prior distribution $f_1$.

\For {each episode $k = 1, 2, \dots, K$}
   \State Sample $M^2_k \sim f_k$.
    \State Compute $(\tilde{\mu}_k, \nu_k) = DP(M^2_k)$ according to Lemma \ref{lemma:minmaxdp}.
    
    \For {each timestep $h = 1, 2, \dots, H$}
        \State Observe $s_h$  and sample action $b_h \sim \nu_k(s_h,h)$.
        \State Observe $a_h,b_h,r_h$.
  
    \EndFor
    \State Update posterior distribution with the history according to \eqref{eq:posterior}.
\EndFor
\end{algorithmic}
\end{algorithm}

We can now state our main theoretical results.
\begin{theorem} \label{thm:main}
If both players use the posterior sampling algorithm, then
\[
|\mathbb{E}\left[\text{Regret}(K)\right]|\leq37HS\sqrt{AB KH \log(S AB KH)}.
\]
\end{theorem}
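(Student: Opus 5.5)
We prove the bound by splitting $\Delta_k$ with a "sampled-game" term that vanishes in expectation, and then controlling what is left by a concentration-based model-error argument, as in posterior sampling for single-agent RL.

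\textbf{Step 1: posterior sampling identity.} Since $M^*$, $M^1_k$ and $M^2_k$ are identically distributed given $h_{t_k}$, we have $\mathbb{E}[J^{M^*}_{\mu^*,\nu^*}]=\mathbb{E}[J^{M^1_k}_{eq}]=\mathbb{E}[J^{M^2_k}_{eq}]$, where $J^{M}_{eq}=\sum_s\rho(s)V^M_{eq,1}(s)$ is the equilibrium value of $M$. Using the equilibrium property of $(\mu_k,\tilde\nu_k)$ in $M^1_k$, we get $J^{M^1_k}_{eq}\le J^{M^1_k}_{\mu_k,\nu}$ for every $\nu$, in particular for $\nu=\nu_k$. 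Using the equilibrium property of $(\tilde\mu_k,\nu_k)$ in $M^2_k$, we get $J^{M^2_k}_{eq}\ge J^{M^2_k}_{\mu_k,\nu_k}$. Taking expectations, these give the two-sided sandwich
\[
\mathbb{E}\big[J^{M^2_k}_{\mu_k,\nu_k}-J^{M^*}_{\mu_k,\nu_k}\big]\le \mathbb{E}[\Delta_k]\le \mathbb{E}\big[J^{M^1_k}_{\mu_k,\nu_k}-J^{M^*}_{\mu_k,\nu_k}\big].
\]
Hence $|\mathbb{E}[\mathrm{Regret}(K)]|$ is at most $\max_{i}\sum_k\mathbb{E}|J^{M^i_k}_{\mu_k,\nu_k}-J^{M^*}_{\mu_k,\nu_k}|$. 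Each such term compares the \emph{same} policy pair evaluated in two games, which reduces the problem to a policy-evaluation error.

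\textbf{Step 2: simulation lemma.} Apply the dynamic programming equation (Lemma 1) in both games and telescope along the true trajectory. This expresses $J^{M^i_k}_{\mu_k,\nu_k}-J^{M^*}_{\mu_k,\nu_k}$ as the expected sum over $h$ of two contributions:
\[
\big(\overline R^{M^i_k}-\overline R^{M^*}\big)(s_h,a_h,b_h)\quad\text{and}\quad \big(\theta^i_k-\theta^*\big)(\cdot|s_h,a_h,b_h)\cdot V^{M^i_k}_{\mu_k,\nu_k,h+1}.
\]
Since $|V|\le 2H$ (rewards lie in $[-1,1]$), each term is bounded by
\[
|\overline R^{M^i_k}-\overline R^{M^*}|+2H\,\|\theta^i_k-\theta^*\|_1,
\]
evaluated at the visited $(s_h,a_h,b_h)$.

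\textbf{Step 3: concentration.} Let $N_k(s,a,b)$ be the visit count before episode $k$, and let $\hat\theta_k,\hat r_k$ be the empirical estimates. Define confidence sets with radii $\beta_k\approx\sqrt{S\log(SABKH)/\max(1,N_k)}$ for the transitions (via the Weissman $L_1$ inequality) and a Hoeffding radius for the rewards. By the posterior sampling identity again, $\Pr(M^i_k\notin\text{set})=\Pr(M^*\notin\text{set})$, and choosing $\delta\sim1/(KH)$ makes the failure contribution $O(1)$ per episode summed to a constant. On the good event, both models lie in the set, so the errors are at most twice the radius. Summing $\sum_k\sum_h \min\{2H,\ 2H\cdot 2\beta_k(s_h,a_h,b_h)\}$, and handling the fact that counts update only at episode ends (a standard factor-of-2 argument once $N_k\ge H$, with the under-visited case contributing $O(H\cdot SABH)$), the pigeonhole bound $\sum 1/\sqrt{N}\le 2\sqrt{SAB\,KH}$ gives $O(H\sqrt{S\log(\cdot)}\sqrt{SABKH})=O(HS\sqrt{ABKH\log(SABKH)})$.

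\textbf{Main obstacle.} The delicate step is Step 1. Player 2's policy $\nu_k$ is random and correlated with $M^*$ through the history, so we must justify that $(\mu_k,\nu_k)$ is conditionally independent of each sample being swapped. The lower bound uses only $M^2_k$, the upper bound uses only $M^1_k$, and the two samples are independent given $h_{t_k}$, which is what makes each one-sided comparison valid. A secondary difficulty is bookkeeping the constant 37: we would track the failure-event terms and the $N<H$ terms explicitly, absorbing lower-order contributions using $\log(SABKH)\ge1$.
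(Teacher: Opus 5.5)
Your three steps follow the paper's argument. Step 1 is Lemmas~\ref{lemma:regreteq}--\ref{lemma:regret2}, Step 2 is Lemma~\ref{lemma:new}, and Step 3 is Appendix~\ref{app:A} plus the almost-sure bound on $\Upsilon$ from Osband et al.

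\textbf{The gap: the final absorption.} The under-visited steps leave an additive burn-in term of order $H^2SAB$. You cannot absorb it into $HS\sqrt{ABKH\log(SABKH)}$ using only $\log(SABKH)\ge 1$: the ratio is $\sqrt{HAB/(K\log(SABKH))}$, which is unbounded when $K\ll HAB$. As written, your plan proves $|\mathbb{E}[\mathrm{Regret}(K)]|\le C_1HS\sqrt{ABKH\log(SABKH)}+C_2H^2SAB$, not the stated bound.

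The missing ingredient is the trivial bound $|\mathbb{E}[\mathrm{Regret}(K)]|\le 2KH$, which holds because each $|J|\le H$. Combine it with $\min\{x,y\}\le\sqrt{xy}$:
\[
\min\{C_2H^2SAB,\,2KH\}\le \sqrt{2C_2}\,H\sqrt{SABKH}\le \sqrt{2C_2}\,HS\sqrt{ABKH\log(SABKH)}.
\]
Only at this point are $\log(SABKH)\ge 1$ and $\sqrt{S}\le S$ used. The regime where the burn-in term dominates is exactly the regime where $2KH$ is already below the target. This is how the paper gets $\min\{12H^2SAB,2KH\}\le H\sqrt{24KHSAB}$, and then the constant via $12\sqrt{7}+\sqrt{24}<37$.

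\textbf{A smaller point on the constant.} Your bound $|V|\le 2H$ is loose by a factor of two. $V^{M^i_k}_{\mu_k,\nu_k,h+1}$ sums at most $H$ mean rewards in $[-1,1]$, so $\|V\|_\infty\le H$. On the good event the per-step error is then at most $(2H+4)\min\{\beta_k,1\}$, as in the paper. With your $2H$ the multiplier becomes $4H+4$, and the same chain of estimates would give a leading coefficient of about $16\sqrt{7}\approx 42$. That would not deliver the stated $37$.

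\textbf{Your ``main obstacle'' is not one.} The inequality $J^{M^1_k}_{eq}\le J^{M^1_k}_{\mu_k,\nu_k}$ holds pointwise, because $\mu_k$ is a maximin policy for $M^1_k$. The term $J^{M^*}_{\mu_k,\nu_k}$ is never swapped. The only posterior-sampling swap is of the equilibrium value, which is a function of the model alone. So independence of $M^1_k$ and $M^2_k$ is never used. That is exactly why the upper half of the sandwich carries over unchanged to an arbitrary opponent in Theorem~\ref{thm:arb_opp}.
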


\begin{theorem}\label{thm:arb_opp}
If player 1 uses the posterior sampling algorithm (Algorithm \ref{alg:maxalg}), then regardless of the learning algorithm used by player 2, we have 
 \[
 \mathbb{E}[\mbox{Regret}(K)]\leq37HS\sqrt{AB KH \log(S AB KH)}.
 \]
 \end{theorem}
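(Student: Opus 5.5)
Theorem \ref{thm:arb_opp} asks only for an upper bound on $\mathbb{E}[\mbox{Regret}(K)]$, so the plan is to decompose $\Delta_k$ through the sampled game $M^1_k$, controlling the first piece with the posterior sampling property and the second with a concentration argument. Write
\[
\Delta_k=\big(J^{M^*}_{\mu^*,\nu^*}-J^{M^1_k}_{\mu_k,\tilde{\nu}_k}\big)+\big(J^{M^1_k}_{\mu_k,\tilde{\nu}_k}-J^{M^*}_{\mu_k,\nu_k}\big).
\]

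For the first term, $J^{M}_{eq}:=\sum_s\rho(s)V^M_{eq,1}(s)$ is a deterministic function of the game $M$, and $(\mu_k,\tilde\nu_k)=DP(M^1_k)$. Hence $J^{M^1_k}_{\mu_k,\tilde\nu_k}=J^{M^1_k}_{eq}$ and $J^{M^*}_{\mu^*,\nu^*}=J^{M^*}_{eq}$. Given $h_{t_k}$, both $M^1_k$ and $M^*$ have law $f_k$. This holds even though player 2's policy $\nu_k$ may depend arbitrarily on the history, since that history is common knowledge and player 1's sampling uses fresh randomness. Therefore $\mathbb{E}[J^{M^*}_{eq}-J^{M^1_k}_{eq}]=0$, and the first term vanishes in expectation.

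For the second term, note that $\tilde\nu_k$ is a best response to $\mu_k$ in $M^1_k$, so $J^{M^1_k}_{\mu_k,\tilde\nu_k}\le J^{M^1_k}_{\mu_k,\nu_k}$ by the equilibrium property of Lemma \ref{lemma:minmaxdp}. This is the only place where the arbitrary opponent enters, and the inequality has the right sign. It remains to bound $\mathbb{E}\sum_k\big(J^{M^1_k}_{\mu_k,\nu_k}-J^{M^*}_{\mu_k,\nu_k}\big)$, which compares one policy pair, the pair actually played, under two models. Unroll it with the dynamic programming equation (the first Lemma) and the Bellman operator \eqref{eq:bellman}, telescoping along the true trajectory in the usual simulation-lemma way. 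The difference becomes the expectation over the realized trajectory of
\[
\sum_{h=1}^H\Big[\big(\overline{R}^{M^1_k}-\overline{R}^{M^*}\big)(s_{t},a_t,b_t)+\sum_{s'}\big(\theta^1_k-\theta^*\big)(s'|s_t,a_t,b_t)V^{M^1_k}_{\mu_k,\nu_k,h+1}(s')\Big].
\]
Each summand is bounded by $|\overline R^{M^1_k}-\overline R^{M^*}|+H\|\theta^1_k(\cdot|z)-\theta^*(\cdot|z)\|_1$, where $z=(s_t,a_t,b_t)$.

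Next, build confidence sets $\mathcal{C}_k$ centered at the empirical estimates $\hat\theta_k,\hat R_k$. Use an $L_1$ width of order $\sqrt{S\log(SABKH)/\max(1,N_k(z))}$ for transitions and a Hoeffding width for rewards, where $N_k(z)$ is the visit count of $z$ before episode $k$. The set $\mathcal{C}_k$ is measurable with respect to $h_{t_k}$, so the posterior sampling property gives $P(M^1_k\notin\mathcal C_k)=P(M^*\notin\mathcal C_k)$. This probability is $O(1/(KH))$ by Weissman's inequality and Hoeffding with a union bound, so the failure events contribute $O(H)$ in total. On the good event, use the triangle inequality through $\hat\theta_k$, which doubles the widths. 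Summing $\sum_k\sum_h \min\{2H,\ cH\sqrt{S\log/N_k(z)}\}$ over the visited $z$ with the pigeonhole bound $\sum_z\sum_{n\le N(z)}n^{-1/2}\le\sqrt{SAB\cdot KH}$ yields $O(HS\sqrt{ABKH\log(SABKH)})$. Tracking constants should give the stated 37.

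The main obstacle I expect is within-episode count staleness: $N_k(z)$ is frozen during the episode while $z$ may be visited up to $H$ times. I would handle this by splitting into state-actions with $N_k(z)\ge$ the visits in the current episode versus the rest, or more simply by bounding the per-step terms trivially by $2H$ when $N_k(z)<H$. The latter contributes at most $O(H^2SAB)$, which is lower order when $K\gtrsim HSAB$; otherwise the claimed bound already exceeds the trivial $2HK$. A secondary technical point is justifying the posterior sampling identity when $\nu_k$ is randomized by player 2's own coin flips. I would condition on the full history, including player 2's choices, and note that the posterior given observations is still $f_k$, because player 2's randomness is independent of $M^*$ given the history.
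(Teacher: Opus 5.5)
Your proposal is correct and follows the paper's proof essentially step for step. The posterior-sampling identity gives $\mathbb{E}[\Delta_k]=\mathbb{E}[\hat{\Delta}^1_k]$. The saddle-point property of $(\mu_k,\tilde{\nu}_k)$ in $M^1_k$ gives $\mathbb{E}[\hat{\Delta}^1_k]\le\mathbb{E}[\tilde{\Delta}^1_k]$. The paper then closes the argument with the same Osband-style ingredients you use: the simulation lemma, confidence sets with $P(M^1_k\notin\mathcal{M}_k)=P(M^*\notin\mathcal{M}_k)$, the triangle inequality through $\hat{\theta}_k$, and the $\min\{\cdot,2KH\}$ trick.

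The only loose end is the explicit constant $37$. The paper gets it by fixing its specific $\beta_k$ and quoting Osband et al.'s almost-sure bound on $\Upsilon$, so with your own Weissman/Hoeffding widths you would need to redo the arithmetic to land on $37$. Your treatment of within-episode count staleness spells out what the paper leaves to the citation. Your remark that player 2's private randomness must be independent of $M^*$, so that $f_k$ really is the posterior given $h_{t_k}$, makes explicit what the paper leaves implicit.
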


\section{Analysis}\label{sec:analysis}


The following lemma describes a key property of the posterior sampling algorithm.
\begin{lemma}\label{lemma:PS} (Posterior Sampling).
For any bounded function $g$ of Markov game and history $h_{t_k}$, 
\begin{equation}
    \mathbb{E}[g(M^*, h_{t_k})]=\mathbb{E}[g(M^1_k, h_{t_k})]=\mathbb{E}[g(M^2_k, h_{t_k})].
\end{equation}
\end{lemma}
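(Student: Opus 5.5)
The plan is to use the tower property of conditional expectation, conditioning on the history $h_{t_k}$. The whole argument rests on one structural fact: conditional on $h_{t_k}$, the sample $M^1_k$ (and likewise $M^2_k$) has the same distribution as the true game $M^*$. I would first establish that fact and then integrate out.

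\textbf{Step 1: $f_k$ is the true posterior.} By induction on $k$, $f_k$ equals the true conditional distribution of $M^*=(\theta^*,\lambda^*)$ given $h_{t_k}$.
\begin{itemize}
\item The base case holds because $f_1$ is the prior.
\item For the inductive step, note that the likelihood of the episode-$k$ data given $M^*$ and $h_{t_k}$ factors into three parts:
\begin{itemize}
\item the initial-state term $\rho(s_{t_k})$, which is known and does not depend on $M^*$;
\item the action-selection terms $\mu_k(s_t,\cdot)(a_t)\,\nu_k(s_t,\cdot)(b_t)$;
\item the transition and reward terms $\theta^*(s_{t+1}\mid s_t,a_t,b_t)$ and $d_{\lambda^*(s_t,a_t,b_t)}(r_t)$.
\end{itemize}
\item The crucial point is that the policies $\mu_k,\nu_k$ are generated from $h_{t_k}$ and the players' internal randomness (the samples $M^1_k,M^2_k$). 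This randomness is conditionally independent of $M^*$ given $h_{t_k}$, because each sample is drawn from $f_k$, a function of the history only, using fresh randomness.
\item Hence the action-selection terms do not depend on $M^*$ and cancel in the normalization. What remains is exactly the Bayes update \eqref{eq:posterior}, so $f_{k+1}$ is the true posterior given $h_{t_{k+1}}$.
\end{itemize}
This step is the main obstacle. It requires a careful treatment of conditional independence, including the fact that the other player's sample (in the setting of Theorem~\ref{thm:arb_opp}, an arbitrary algorithm's internal randomness) is independent of $M^*$ given the history. It may also need to be phrased with regular conditional distributions, since $\theta,\lambda$ are continuous.

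\textbf{Step 2: tower property.} Since $M^1_k\sim f_k(\cdot\mid h_{t_k})$ by construction, Step 1 gives that, conditionally on $h_{t_k}$, $M^*$ and $M^1_k$ are identically distributed. Because $g$ is bounded and measurable, the conditional expectations are well defined and
\[
\mathbb{E}[g(M^*,h_{t_k})\mid h_{t_k}]=\int g(M,h_{t_k})\,f_k(dM)=\mathbb{E}[g(M^1_k,h_{t_k})\mid h_{t_k}].
\]
Taking expectations of both sides yields $\mathbb{E}[g(M^*,h_{t_k})]=\mathbb{E}[g(M^1_k,h_{t_k})]$.

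\textbf{Step 3: the second equality.} The identical argument applies to $M^2_k$, because both players maintain the same posterior $f_k$.
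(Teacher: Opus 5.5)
Your proposal is correct and follows essentially the same route as the paper, which simply defers to the standard posterior-sampling lemma of \cite{osband2013more}: conditional on $h_{t_k}$, both the sample $M^i_k$ and the true game $M^*$ have law $f_k$, and the tower property gives the claim. Your Step 1 spells out a point the paper leaves implicit, namely the induction showing $f_k$ is the true posterior because the policy randomness is conditionally independent of $M^*$ given the history, so the action-selection terms cancel in Bayes' rule; it also explains why this still holds against an arbitrary history-based opponent.
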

\begin{proof}
    The lemma follows from  results in \cite{russo2014learning,osband2013more,ouyang2017learning}.
\end{proof}

To analyze the regret, we define two quantities related to $\Delta_k$ defined in \eqref{eq:delta0}.
\begin{equation}
\begin{aligned}
\hat{\Delta}^1_k:=J^{M_k^1}_{\mu_k,\tilde{\nu}_k}-J^{M^*}_{\mu_k,\nu_k},\\
\hat{\Delta}^2_k:=J^{M_k^2}_{\tilde{\mu}_k,\nu_k}-J^{M^*}_{{\mu_k,\nu_k}}.
\end{aligned}
\end{equation}
 In the definition of $\hat{\Delta}^1_k$, the second term (i.e. $J^{M^*}_{\mu_k,\nu_k}$) is the total expected reward under the policies used by the two agents in episode $k$ with the Markov game being $M^*$; the first term (i.e. $J^{M^1_k}_{\mu_k,\tilde{\nu}_k}$) is the total expected reward of the \emph{equilibrium policies} for the game $M^1_k$ sampled by player 1 in episode $k$.  A similar interpretation holds for $\hat{\Delta}^2_k$. The following lemma is a consequence of Lemma \ref{lemma:PS}.
\begin{lemma}  \label{lemma:regreteq}
\[
\mathbb{E}\left[\hat{\Delta}^2_k\right]=\mathbb{E}\left[ \Delta_k\right] =\mathbb{E}\left[\hat{\Delta}^1_k\right].
\]
\end{lemma}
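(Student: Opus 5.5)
Since $\Delta_k$, $\hat{\Delta}^1_k$ and $\hat{\Delta}^2_k$ share the subtracted term $J^{M^*}_{\mu_k,\nu_k}$, the lemma reduces to
\[
\mathbb{E}\big[J^{M^*}_{\mu^*,\nu^*}\big]=\mathbb{E}\big[J^{M^1_k}_{\mu_k,\tilde{\nu}_k}\big]=\mathbb{E}\big[J^{M^2_k}_{\tilde{\mu}_k,\nu_k}\big].
\]
All three terms are integrable because rewards lie in $[-1,1]$, so every $J$ has absolute value at most $H$. I would prove the first equality by writing both sides as $g$ evaluated at a Markov game, with $g$ the equilibrium value map, and then applying Lemma \ref{lemma:PS}.

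Define $g(M):=\sum_s\rho(s)V^M_{eq,1}(s)$, where $V^M_{eq,h}$ is given by the max-min recursion of Lemma \ref{lemma:minmaxdp}. This $g$ is a deterministic function of $M$ alone, a trivial function of the history $h_{t_k}$, and bounded by $H$. The key observation is that $J^M_{\mu,\nu}=g(M)$ for \emph{every} equilibrium pair $(\mu,\nu)=DP(M)$, whatever rule is used to break ties in the argmax/argmin. To check this, show by backward induction on $h$ that $V^M_{\mu,\nu,h}=V^M_{eq,h}$. At step $h$, the dynamic programming lemma gives $V^M_{\mu,\nu,h}=\mathcal{T}^M_{\mu,\nu,h}V^M_{eq,h+1}$ by the induction hypothesis. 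This is the bilinear payoff evaluated at $(\mu(s,h),\nu(s,h))$, which are maximin and minimax strategies of the stage matrix game. By the saddle-point equality in \eqref{eq:mindp}, any maximin/minimax pair attains the game value, so $V^M_{\mu,\nu,h}(s)=V^M_{eq,h}(s)$. Consequently $J^{M^*}_{\mu^*,\nu^*}=g(M^*)$, $J^{M^1_k}_{\mu_k,\tilde{\nu}_k}=g(M^1_k)$ and $J^{M^2_k}_{\tilde{\mu}_k,\nu_k}=g(M^2_k)$.

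Lemma \ref{lemma:PS} then gives $\mathbb{E}[g(M^*)]=\mathbb{E}[g(M^1_k)]=\mathbb{E}[g(M^2_k)]$. Subtracting $\mathbb{E}[J^{M^*}_{\mu_k,\nu_k}]$ from each term completes the proof.

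The main obstacle is this identification step. If $(\mu_k,\tilde\nu_k)$ were allowed to depend on extra randomness or on the history, Lemma \ref{lemma:PS} would not apply to it directly. Showing that the value is independent of which equilibrium $DP$ returns removes that dependence, which is why the induction is the real content of the proof. A secondary point is that $\nu_k$ depends on $M^2_k$, but this is harmless because the term $J^{M^*}_{\mu_k,\nu_k}$ involving it cancels exactly and is never passed through Lemma \ref{lemma:PS}.
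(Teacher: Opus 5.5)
Your proposal is correct and follows the same route as the paper. Both arguments note that the subtracted term $J^{M^*}_{\mu_k,\nu_k}$ is common to all three quantities, reduce the claim to $\mathbb{E}[J^{M^*}_{\mu^*,\nu^*}]=\mathbb{E}[J^{M^i_k}_{DP(M^i_k)}]$, and apply Lemma~\ref{lemma:PS} to the equilibrium value viewed as a function of the model. Your backward induction, showing that $J^M_{\mu,\nu}=\sum_s\rho(s)V^M_{eq,1}(s)$ for every pair returned by the dynamic program regardless of tie-breaking, makes explicit a step the paper takes for granted. If $(\mu^*,\nu^*)$ is an arbitrary Nash pair rather than a $DP(M^*)$ output, the standard fact that all equilibria of a zero-sum game share the same value completes that identification.
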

\begin{proof}
\begin{align}
   &\mathbb{E}\left[\hat{\Delta}^1_k\right]= \mathbb{E}[J^{M_k^1}_{\mu_k,\tilde{\nu}_k}]-\mathbb{E}[J^{M^*}_{\mu_k,\nu_k}] \notag \\
   & = \mathbb{E}[J^{M_k^1}_{DP(M^1_k)}]-\mathbb{E}[J^{M^*}_{\mu_k,\nu_k}] \notag \\
   & = \mathbb{E}[J^{M^*}_{DP(M^*)}]-\mathbb{E}[J^{M^*}_{\mu_k,\nu_k}] \label{eq:r1} 
   =\mathbb{E}\left[{\Delta}_k\right],
\end{align}
where we used Lemma \ref{lemma:PS} in \eqref{eq:r1}. A similar argument applies to $\mathbb{E}\left[\hat{\Delta}^2_k\right]$.
\end{proof}
Next, we define $\tilde{\Delta}^1_k$ and $\tilde{\Delta}^2_k$ as
\begin{equation} \label{eq:hats}
\begin{aligned}
\tilde{\Delta}^1_k:=J^{M_k^1}_{\mu_k,\nu_k}-J^{M^*}_{\mu_k,\nu_k},\\
\tilde{\Delta}^2_k:=J^{M_k^2}_{\mu_k,\nu_k}-J^{M^*}_{{\mu_k,\nu_k}}.
\end{aligned}
\end{equation}
$\tilde{\Delta}^1_k$ is the difference between total expected rewards of policies $\mu_k,\nu_k$ under player 1's sampled Markov game $M^1_k$ and the true game $M^*$; similar interpretation holds for $\tilde{\Delta}^2_k$. We have the following result. 
\begin{lemma}\label{lemma:regret2}
  \begin{equation}
\mathbb{E}\left[\tilde{\Delta}^2_k\right] \leq \mathbb{E}\left[ \Delta_k\right] \leq \mathbb{E}\left[\tilde{\Delta}^1_k\right].
\end{equation}
\end{lemma}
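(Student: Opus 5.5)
We combine Lemma \ref{lemma:regreteq} with the equilibrium (saddle-point) property of the policy pairs computed from the sampled games. The idea is to compare $\hat{\Delta}^i_k$ and $\tilde{\Delta}^i_k$ pointwise, i.e.\ for every realization of the samples and the history. Taking expectations then finishes the argument.

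For the upper bound, $(\mu_k,\tilde{\nu}_k)=DP(M^1_k)$ is a Nash equilibrium of the game $M^1_k$ by Lemma \ref{lemma:minmaxdp}. The right inequality in the equilibrium definition therefore gives $J^{M^1_k}_{\mu_k,\tilde{\nu}_k}\le J^{M^1_k}_{\mu_k,\nu}$ for every policy $\nu$ of player 2. In particular it holds for $\nu=\nu_k$, the policy actually used by player 2; $\nu_k$ is a legitimate (random) policy, and the inequality holds pathwise. Subtracting the common term $J^{M^*}_{\mu_k,\nu_k}$ from both sides yields $\hat{\Delta}^1_k\le\tilde{\Delta}^1_k$ almost surely. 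Taking expectations and using $\mathbb{E}[\Delta_k]=\mathbb{E}[\hat{\Delta}^1_k]$ from Lemma \ref{lemma:regreteq} gives $\mathbb{E}[\Delta_k]\le\mathbb{E}[\tilde{\Delta}^1_k]$.

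For the lower bound, $(\tilde{\mu}_k,\nu_k)=DP(M^2_k)$ is an equilibrium of $M^2_k$. The left inequality in the equilibrium definition gives $J^{M^2_k}_{\mu,\nu_k}\le J^{M^2_k}_{\tilde{\mu}_k,\nu_k}$ for every $\mu$, and we take $\mu=\mu_k$. Subtracting $J^{M^*}_{\mu_k,\nu_k}$ gives $\tilde{\Delta}^2_k\le\hat{\Delta}^2_k$ pointwise. Taking expectations and applying Lemma \ref{lemma:regreteq} yields $\mathbb{E}[\tilde{\Delta}^2_k]\le\mathbb{E}[\Delta_k]$.

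The only point needing care is justifying that the expectations exist and that the pointwise inequalities survive expectation. Boundedness handles this: rewards lie in $[-1,1]$, so all $J$ values lie in $[-H,H]$. It also matters that the equilibrium inequality is applied for the fixed sampled model with the other player's realized policy plugged in. No independence between $M^1_k$ and $\nu_k$ is needed, because the Nash inequality holds against every policy simultaneously. With that observed, the rest is immediate.
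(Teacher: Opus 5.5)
Your proposal is correct and follows essentially the same route as the paper: both use Lemma~\ref{lemma:regreteq} to replace $\mathbb{E}[\Delta_k]$ by $\mathbb{E}[\hat{\Delta}^i_k]$. Both then apply the Nash inequality for the sampled game's equilibrium pair, with the other player's actual policy plugged in. You state the comparison $\hat{\Delta}^1_k\le\tilde{\Delta}^1_k$ pointwise before taking expectations, whereas the paper writes it directly as $\mathbb{E}\big[J^{M_k^1}_{\mu_k,\nu_k}-J^{M_k^1}_{\mu_k,\tilde{\nu}_k}\big]\ge 0$; this is only a presentational difference.
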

\begin{proof}
\begin{align}
\mathbb{E}\left[\tilde{\Delta}^1_k\right]-\mathbb{E}\left[\Delta_k\right]&=\mathbb{E}\left[\tilde{\Delta}^1_k\right]-\mathbb{E}\left[\hat{\Delta}_k^1\right] \label{eq:lem4}\\ 
&=\mathbb{E}\left[J^{M_k^1}_{\mu_k,\nu_k} - J^{M_k^1}_{\mu_k,\tilde{\nu}_k}\right] 
\geq0, \label{eq:nasheq}
\end{align}
where we used Lemma \ref{lemma:regreteq} in \eqref{eq:lem4} and the fact that $(\mu_k,\tilde{\nu}_k)$ is a  Nash equilibrium for Markov game $M^1_k$ in \eqref{eq:nasheq}. Using a similar argument, we have $\mathbb{E}\left[\tilde{\Delta}^2_k\right]-\mathbb{E}\left[\Delta_k\right]\leq0.$
\end{proof}
Lemma \ref{lemma:regret2} suggests that we can bound the expected regret by establishing an upper bound on $\sum_{k=1}^{K}\mathbb{E}\left[\tilde{\Delta}^1_k\right]$ and a lower bound on $\sum_{k=1}^{K}\mathbb{E}\left[\tilde{\Delta}^2_k\right]$. To do so, we define the following quantities:
\begin{align}
   N_{t_k}(s, a,b) := \sum_{t=1}^{t_k - 1} \mathbbm{1}_{\{(s_t, a_{t},b_{t}) = (s,a,b)\}}. 
\end{align}
\begin{align} \label{eq:beta}
\beta_k(s, a,b) := \sqrt{\frac{14S \log(2S A B K t_k)}{\max\{1, N_{t_k}(s, a,b)\}}}.
\end{align}
We also introduce a new random variable $\Upsilon$ defined  below:
\begin{align}\label{eq:upsilon}
    \Upsilon:= (2H+4)  \sum_{k=1}^{K} \sum_{h=0}^{H-1} \min\{\beta_k(s_{t_k+h}, a_{t_k+h}, b_{t_k+h}), 1\} \notag\\
    + 4H.
\end{align}
We have the following bounds.
\begin{lemma}\label{lemma:delta_hat_bounds}
\begin{align}
    - \mathbb{E}[\Upsilon] \leq  \sum_{k=1}^{K} \mathbb{E} \left[  \tilde{\Delta}^2_k \right] \leq \sum_{k=1}^{K} \mathbb{E} \left[  {\Delta}_k \right]\leq  \sum_{k=1}^{K} \mathbb{E} \left[  \tilde{\Delta}^1_k \right] \leq \mathbb{E}[\Upsilon].
\end{align}
\end{lemma}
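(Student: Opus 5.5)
The middle two inequalities follow by summing Lemma \ref{lemma:regret2} over $k$. What remains is to show $\sum_k \mathbb{E}[\tilde{\Delta}^1_k] \le \mathbb{E}[\Upsilon]$ and, symmetrically, $\sum_k \mathbb{E}[\tilde{\Delta}^2_k] \ge -\mathbb{E}[\Upsilon]$. The argument is the same for both players because $M^1_k$ and $M^2_k$ are each posterior samples. I therefore treat player 1 and bound $\sum_k \mathbb{E}[|\tilde{\Delta}^1_k|]$.

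\textbf{Step 1 (simulation lemma).} Fix $k$ and the common policies $(\mu_k,\nu_k)$. Write $M=M^1_k$, and write $V^M_h$ and $V^{M^*}_h$ for the corresponding value functions. Using the dynamic programming equation for both games and telescoping along the trajectory generated in $M^*$ gives
\begin{align*}
\tilde{\Delta}^1_k = \mathbb{E}\Big[\sum_{h=0}^{H-1} \Big( &\big(\overline{R}^{M}-\overline{R}^{M^*}\big)(s_{t_k+h},a_{t_k+h},b_{t_k+h}) \\
&+ \big\langle \theta^1_k(\cdot|z_{t_k+h})-\theta^*(\cdot|z_{t_k+h}),\, V^{M}_{h+2}\big\rangle\Big)\Big],
\end{align*}
where $z_t=(s_t,a_t,b_t)$. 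The expectation is over the actual trajectory of episode $k$ under $M^*$, so after taking the outer expectation we may use the realized states and actions. Since rewards lie in $[-1,1]$, $\|V^M_{h}\|_\infty\le H$. Hence each summand is at most $|\overline{R}^M-\overline{R}^{M^*}|+H\|\theta^1_k(\cdot|z)-\theta^*(\cdot|z)\|_1$, and it is also trivially at most $2+2H$.

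\textbf{Step 2 (confidence sets).} Let $\hat\theta_k$ and $\hat r_k$ be the empirical transition and mean-reward estimates at $t_k$. Define the event $E_k$ that for all $(s,a,b)$ we have $\|\hat\theta_k-\theta^*\|_1\le\beta_k$ and $|\hat r_k-\overline{R}^{M^*}|\le\beta_k$. The Weissman $L_1$ concentration inequality and Hoeffding's inequality, with the $14S\log(2SABKt_k)$ constant in \eqref{eq:beta} and a union bound over $(s,a,b)$ and over possible counts, should give $P(E_k^c)\le 1/(Kt_k)$ or similar. The event $E_k$ is a function of $(M^*,h_{t_k})$. Lemma \ref{lemma:PS} then says the same probability bound holds with $M^1_k$ in place of $M^*$. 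On $E_k$ and its sampled counterpart, the triangle inequality gives deviations between $M^1_k$ and $M^*$ of at most $2\beta_k$.

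\textbf{Step 3 (assembly).} Combining the two steps, each summand is at most $\min\{(2+2H)\beta_k,\,2+2H\}\le(2H+4)\min\{\beta_k,1\}$ on the good events. On the bad events the total contribution is at most $2H$ per episode times the probability of the bad event. Summing over $k$ and $h$ gives $\sum_k (2H)\cdot 2/(Kt_k)\le 4H$, which is the additive term in \eqref{eq:upsilon}. Taking expectations then yields $\sum_k\mathbb{E}[|\tilde\Delta^1_k|]\le\mathbb{E}[\Upsilon]$. The same bound for $\tilde\Delta^2_k$ gives the lower bound $-\mathbb{E}[\Upsilon]$.

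The main obstacle is Step 2. Its confidence width must be stated through $\beta_k$, a function of the history, so that Lemma \ref{lemma:PS} transfers the concentration from $M^*$ to the sample $M^1_k$. The union bound over the random counts $N_{t_k}$ must also be handled so that the failure probability yields exactly the $4H$ additive term.
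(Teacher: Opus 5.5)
Your proposal is correct and follows essentially the same route as the paper. The shared steps are: the simulation-lemma decomposition (the paper's Lemma~\ref{lemma:new}); confidence sets of width $\beta_k$ around $\hat\theta_k,\hat R_k$, whose failure probability is transferred from $M^*$ to $M^i_k$ via Lemma~\ref{lemma:PS}; the triangle inequality through the empirical estimates on the good event; a $2H$-per-episode charge on the bad events, yielding the $4H$ term; and Lemma~\ref{lemma:regret2} for the middle inequalities. The differences are cosmetic: you sketch the concentration step yourself where the paper cites Lemma 17 of \cite{jaksch2010near}, you bound $|\tilde\Delta^i_k|$ rather than $\pm\tilde\Delta^i_k$, and your tighter reward width gives $2H+2\le 2H+4$. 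Your remark that the good event depends on $(M^*,h_{t_k})$ is, if anything, more careful than the paper's conditioning on the models alone.
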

\begin{proof}
 See Appendix \ref{app:A}.    
\end{proof}
\subsection{Proof of Theorem \ref{thm:main}} \label{sec:thm1_proof}
Using  Lemma \ref{lemma:delta_hat_bounds}, we can write
\begin{align}
    \left|\sum_{k=1}^{K}\mathbb{E} \left[  {\Delta}_k \right] \right|\leq \mathbb{E}[\Upsilon].
\end{align}
Since the reward at each time belongs to $[-1,1]$, we also have that $\left|\sum_{k=1}^{K}\mathbb{E} \left[  {\Delta}_k \right] \right|\leq 2KH$. Thus, 
\begin{align}
    \left|\sum_{k=1}^{K}\mathbb{E} \left[  {\Delta}_k \right] \right|\leq \min \left\{ \mathbb{E}[\Upsilon], 2KH \right\}.
\end{align}


\cite[Appendix B]{osband2013more} provided an almost sure upper bound of $\Upsilon$ under any  learning algorithm:
\begin{align}\label{eq:upsilonbound}
\Upsilon\leq 12H^2SAB+12HS\sqrt{7ABKH\log(SABKH)}.
\end{align}
Taking the expectation on $\Upsilon$ and combining it with the worst-case bound, we have the following result:
\begin{align}\label{eq:ciupperbound}
&\min \left\{  \mathbb{E}[\Upsilon] , 2KH \right\} \notag\\
&\leq\min\{12H^2SAB+12HS\sqrt{7ABKH\log(SABKH)}\notag\\
&\qquad\qquad\qquad\qquad\qquad\qquad\qquad\qquad\qquad\qquad,2KH\}\notag\\
&\leq 12HS\sqrt{7ABKH\log(SABKH)}\notag\\
&\qquad\qquad\qquad\qquad\qquad\qquad+\min\{12H^2SAB,2KH\}\notag\\
&\leq 12HS\sqrt{7ABKH\log(SABKH)}+H\sqrt{24KHSAB}\notag\\
&\leq 37HS\sqrt{AB KH \log(S AB KH)},
\end{align}
which implies that $|\mathbb{E}[\mbox{Regret}(K)]|$ is 
\[\mathcal{O}(HS\sqrt{ABKH\log(SABKH)}). \]

\subsection{Arbitrary Opponent/Proof of Theorem \ref{thm:arb_opp}}
We now consider the case where player 1 is using the posterior sampling algorithm (Algorithm \ref{alg:maxalg}) but player 2 is using any arbitrary learning algorithm. Let $\nu_k$ denote the policy used by player 2 in episode $k$. Recall that $(\mu_k, \tilde{\nu}_k)$ is the equilibrium policy pair  generated by Algorithm \ref{alg:maxalg} in episode $k$. We can define player 1's regret using \eqref{eq:regret_def} and \eqref{eq:delta0} as in Section \ref{sec:regret_def}.
As  in Section \ref{sec:analysis}, we define $\hat{\Delta}^1_k$ and $\tilde{\Delta}^1_k$ as:
\begin{equation}
    \hat{\Delta}^1_k:=J^{M_k^1}_{\mu_k,\tilde{\nu}_k}-J^{M^*}_{\mu_k,\nu_k}.
\end{equation}
\begin{equation}
    \tilde{\Delta}^1_k:=J^{M_k^1}_{\mu_k,\nu_k}-J^{M^*}_{\mu_k,\nu_k}.
\end{equation}
We note that the argument used to establish $\mathbb{E}\left[ \Delta_k\right]=\mathbb{E}\left[\hat{\Delta}^1_k\right]$ in Lemma \ref{lemma:regreteq} and to establish $\mathbb{E}\left[ \Delta_k\right] \leq \mathbb{E}\left[\tilde{\Delta}^1_k\right] $ in Lemma \ref{lemma:regret2} rely solely on player 1 using the posterior sampling algorithm.  Therefore, using the same arguments with an arbitrary player 2 gives
\begin{equation}
    \mathbb{E}\left[ \Delta_k\right]=\mathbb{E}\left[\hat{\Delta}^1_k\right] \leq
    \mathbb{E}\left[\tilde{\Delta}^1_k\right]. \label{eq:arb_opp}
\end{equation}
We can now employ the proof of Lemma \ref{lemma:delta_hat_bounds} to conclude that 
\begin{align}
     \sum_{k=1}^{K} \mathbb{E} \left[  {\Delta}_k \right]\leq  \sum_{k=1}^{K} \mathbb{E} \left[  \tilde{\Delta}^1_k \right] \leq \mathbb{E}[\Upsilon],
\end{align}
where $\Upsilon$ is as defined in \eqref{eq:upsilon}.  Repeating the steps in Section \ref{sec:thm1_proof} for the proof of Theorem \ref{thm:main} completes the proof.
\section{Experiments}
\begin{figure}[!htbp]
    \centering
    \includegraphics[width=0.5\linewidth]{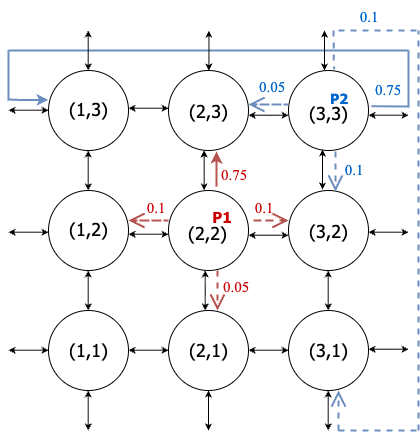}
    \caption{\small{The transition model used in experiments. The red arrows and numbers show the transition probabilities when player 1 at (2,2)  chooses to move upward. The blue arrows and numbers show the transition probabilities when player 2 at (3,3)  chooses to move right. }
    } 
    \label{fig:statediagram}
\end{figure}
\emph{Game Settings:}
 We consider a predator-prey-style two-player zero-sum game for our experiments. Each player stands on a $3 \times 3$ grid, and at each step each player chooses a direction (up, down, left, right) to move. Each player moves one step in its desired direction  with probability 0.75, in the opposite direction with probability 0.05, or in one of the other two directions with probability 0.2 (each direction with probability 0.1, see Figure \ref{fig:statediagram}).   We assume that the grid wraps around at the edges, that is,  if a player goes up from the top row, it will move to the  bottom row in the same column at the next timestep. The transition dynamics for player 1's location are decoupled from player 2's dynamics. We use a 2-dimensional Cartesian coordinates to describe the players' locations.  The reward function of Player 1 is set to be the $\ell_2$ distance between both players times a factor of $1/\sqrt{8}$ for normalization. This implies that Player 1’s objective is to try to maximize its distance from Player 2, and Player 2’s is to minimize it. Note that the reward is deterministic and both players know the reward function before the game starts. Finally, we set the time horizon in each episode to $H=10$, and the distribution of the initial state is the uniform distribution over all possible states. 

\begin{figure}
     \includegraphics[width=0.45\textwidth]{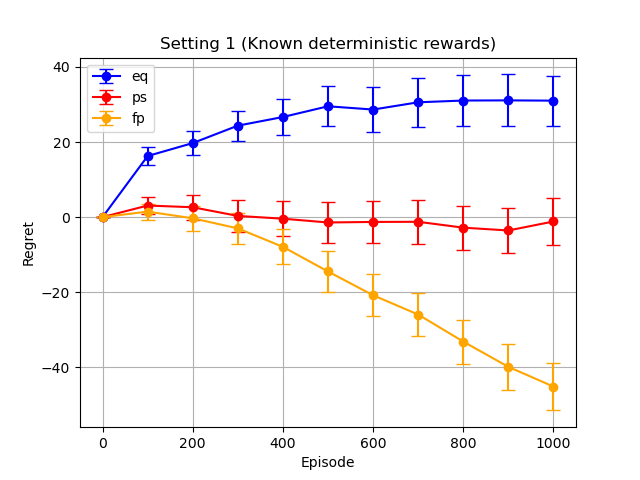}
         \caption{\small{ Player 1's Regret when \textbf{player 1 uses posterior sampling} and player 2 uses (i) true equilibrium strategy (\blue{-- eq}), (ii) player 2 uses fictitious play (\orange{-- fp}), and (iii) player 2 uses posterior sampling (\red{-- ps}). The solid line shows the average of 50 runs and the bar is 95\% confidence interval.}}
         \label{fig:player1ps}
     \end{figure}
     
     \emph{Agent settings:}
Since the players' dynamics are decoupled, the prior distribution on the transition model is the product of   two independent Dirichlet distributions with all  parameters  equal to $1/9$.
In our experiments, we fix player 1 to use the posterior sampling algorithm and consider different learning algorithms for player 2 - a) posterior sampling, b) fictitious play based algorithm (as described below), c) a clairvoyant   algorithm that knows the true game model and  therefore plays the true equilibrium strategy for player 2. 

 The fictitious-play agent operates as follows: Firstly, the agent estimates the game model by the empirical distribution of the state transitions and rewards in the  history. Then it assumes the opponent's strategy is the empirical distribution of opponent's actions in each state. Finally, it calculates its own best response to the estimated opponent's strategy in the estimated model.



\emph{Experiment results:}
Our results   with player 1 using the posterior sampling algorithm are shown in Figure \ref{fig:player1ps}. Each sub-figure shows the average regret of player 1 over 50 runs and the 95\% confidence interval under different algorithms of player 2. Player 1's regret is highest when player 2 is using the true equilibrium strategy. This makes sense since player 2 is better informed in this case (it knows the true game model) and is able to exploit this information superiority. On the other hand, player 1's regret is lowest when player 2 is using fictitious play-based strategy. This suggests that fictitious play based player 2 is not effectively learning the model and is therefore not competing well against a posterior sampling player 1.   When both players use posterior sampling the absolute value of regret remains close to zero, suggesting that the players are somewhat evenly matched.


\section{Conclusions}
In this paper, we studied Bayesian learning in finite-horizon two player zero-sum episodic Markov Games with unknown transition and reward models. We specifically investigated a posterior sampling based learning algorithm where player maintains a posterior distribution over the game model, independently samples a model at the  beginning of each episode, and computes an equilibrium policy for the sampled model.
We established a rigorous theoretical guarantee that shows that the posterior sampling agent achieves sublinear regret on the order of $
O\big(H S \sqrt{A B H K \log(S A B H K)}\big)
$.
Experimental evaluations in a grid-world predator--prey domain illustrate
the sublinear regret scaling and show that posterior sampling
competes favorably with a fictitious-play baseline. Investigating posterior-sampling based learning in non-zero sum games would be an interesting direction for future work.



\appendices

\section{Proof of Lemma \ref{lemma:delta_hat_bounds}}\label{app:A}
Hereafter we will simplify the superscripts as follows:
\begin{equation}
    V^*_{\mu,\nu, h} = V^{M^*}_{\mu,\nu, h}, \quad  V^{i,k}_{\mu,\nu, h}(s) = V^{M^i_k,}_{\mu,\nu,h}(s),
\end{equation}
\begin{equation}
    \mathcal{T}^{*}_{\mu,\nu,h} = \mathcal{T}^{M^*}_{\mu,\nu,h}, \quad \mathcal{T}^{i,k}_{\mu,\nu,h} = \mathcal{T}^{M^i_k}_{\mu,\nu,h}, 
\end{equation}
and 
\begin{equation}
    \quad\overline{R^i_k}=\overline{R}^{M^i_k},\quad\overline{R^*}=\overline{R}^{M^*}.
\end{equation}

The proof uses arguments from Section 5 of \cite{osband2013more}. First consider the conditional expectation of $\tilde{\Delta}_k^1$ conditioned on the true and sampled models. 
\begin{lemma} \label{lemma:new}
For $i=1,2,$
\begin{align}
\mathbb{E} \bigg[  \tilde{\Delta}_k^i\Big| M^*, M^1_k,M^2_k\bigg] =&\mathbb{E} \bigg[ \sum_{h=1}^{H} \Big(\mathcal{T}^{i,k}_{\mu_k,\nu_k,h} - \mathcal{T}^*_{\mu_k,\nu_k,h} \Big)\notag\\
&V^{i,k}_{\mu_k,\nu_k, h+1}(s_{t_k -1 + h}) \Big| M^*, M^1_k, M^2_k \bigg] 
\end{align} 
\end{lemma}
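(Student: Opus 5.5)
This is the standard simulation-lemma (telescoping) argument of \cite[Section 5]{osband2013more}, applied to the fixed policy pair $(\mu_k,\nu_k)$ in the two games $M^i_k$ and $M^*$. Throughout we condition on $M^*, M^1_k, M^2_k$. Given these, $\mu_k$ and $\nu_k$ are determined: $\mu_k$ is a function of $M^1_k$ and $\nu_k$ of $M^2_k$, up to tie-breaking in $DP(\cdot)$, which we take to be deterministic or independent. The trajectory $s_{t_k},\dots,s_{t_k+H-1}$ of episode $k$ is then generated by $\rho$, $\theta^*$ and the policies $(\mu_k,\nu_k)$.

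Write $s^{(h)} := s_{t_k-1+h}$ for the state at step $h$ of episode $k$. The key quantity is
\[
D_h := \mathbb{E}\big[ V^{i,k}_{\mu_k,\nu_k,h}(s^{(h)}) - V^{*}_{\mu_k,\nu_k,h}(s^{(h)}) \,\big|\, M^*,M^1_k,M^2_k\big].
\]
Since $s^{(1)}\sim\rho$ independently of the models, $D_1 = J^{M^i_k}_{\mu_k,\nu_k}-J^{M^*}_{\mu_k,\nu_k}$, which is $\mathbb{E}[\tilde{\Delta}^i_k\mid M^*,M^1_k,M^2_k]$ because $\tilde{\Delta}^i_k$ is already a function of the conditioning variables.

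Apply the dynamic programming lemma to both games, giving $V^{i,k}_{\cdot,h}=\mathcal{T}^{i,k}_{\mu_k,\nu_k,h}V^{i,k}_{\cdot,h+1}$ and $V^{*}_{\cdot,h}=\mathcal{T}^{*}_{\mu_k,\nu_k,h}V^{*}_{\cdot,h+1}$. Adding and subtracting $\mathcal{T}^{*}_{\mu_k,\nu_k,h}V^{i,k}_{\cdot,h+1}$ yields, pointwise in $s$,
\[
V^{i,k}_{h}(s)-V^{*}_{h}(s) = \big(\mathcal{T}^{i,k}_{h}-\mathcal{T}^{*}_{h}\big)V^{i,k}_{h+1}(s) + \mathcal{T}^{*}_{h}\big(V^{i,k}_{h+1}-V^{*}_{h+1}\big)(s) - \text{(reward terms cancel)},
\]
where we use that $\mathcal{T}^*_h V - \mathcal{T}^*_h W = \mathbb{E}_{a\sim\mu_k,b\sim\nu_k}\sum_{s'}\theta^*(s'|s,a,b)(V-W)(s')$ is linear in the difference.

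The central step identifies this last term with the next-step difference. Conditional on the models and on $s^{(h)}=s$, the pair $(a,b)$ is drawn from $\mu_k(s,h)\times\nu_k(s,h)$ and $s^{(h+1)}\sim\theta^*(\cdot|s,a,b)$. Hence
\[
\mathbb{E}\big[\mathcal{T}^{*}_{h}(V^{i,k}_{h+1}-V^{*}_{h+1})(s^{(h)})\big] = \mathbb{E}\big[(V^{i,k}_{h+1}-V^{*}_{h+1})(s^{(h+1)})\big],
\]
all expectations conditional on the models. This gives the recursion
\[
D_h=\mathbb{E}\big[(\mathcal{T}^{i,k}_{h}-\mathcal{T}^{*}_{h})V^{i,k}_{h+1}(s^{(h)})\mid\cdot\big]+D_{h+1}.
\]
Iterating from $h=1$ to $H$ with $D_{H+1}=0$ (both value functions vanish at $H+1$) and summing gives the claimed identity.

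The main obstacle is justifying this tower-property step. It requires that, given the models, the true trajectory evolves under $\theta^*$ with actions drawn from exactly $(\mu_k,\nu_k)$. That is, the conditioning on $M^1_k,M^2_k$ must fix the policies without carrying further information about the within-episode randomness. This holds because the samples are drawn at the start of the episode, independently of the future transitions and action randomization given $M^*$ and the history.

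A subtlety is that the history $h_{t_k}$ also influences the sampled models. However, the within-episode transitions depend on the past only through $M^*$ and the current state-action pair (the Markov property), so conditioning additionally on $h_{t_k}$ and then averaging over it leaves the recursion intact. I would make this explicit by first conditioning on $(M^*,M^1_k,M^2_k,h_{t_k})$, proving the identity there, and then taking the expectation over $h_{t_k}$.
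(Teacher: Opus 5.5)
Your proposal is correct and uses essentially the same telescoping argument as the paper's Appendix B. The paper adds and subtracts $\mathcal{T}^*_{\mu_k,\nu_k,h}V^{i,k}_{\mu_k,\nu_k,h+1}$ and pushes the remainder forward using the explicit state distributions $\rho_{h,k}$ of $s_{t_k-1+h}$ under $(\theta^*,\mu_k,\nu_k)$, which is your tower-property step written as a recursion on distributions; your justification that conditioning on $(M^*,M^1_k,M^2_k)$ fixes the policies, and that the episode's trajectory law depends on $h_{t_k}$ only through them, is something the paper leaves implicit.
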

\begin{proof}
    The proof is similar to \cite{osband2013more}. For the sake of completeness, a proof is provided in Appendix \ref{app:B}.
\end{proof}

Using $\beta_k(s,a,b)$, we can define the confidence set for episode $k$:
\begin{align}
\mathcal{M}_k := &\bigg\{ M : \left\| \hat{\theta}_k(\cdot | s,a,b) - \theta(\cdot | s,a,b) \right\|_1 \leq \beta_k(s, a,b), \\ \notag
    & \frac{1}{2}\left|\hat{R}_k(s,a,b)-\overline{R}^M(s,a,b)\right|\leq\beta_k(s, a,b) ~ \forall (s,a,b) \bigg\}
 \end{align}
where $\hat{\theta}_k(\cdot|s,a,b)$ is an empirical distribution 
defined as follows:
\begin{equation}
    \hat{\theta}_k(s'|s,a,b) = \frac{N_{t_k}(s,a,b,s')}{\max\{1,N_{t_k}(s,a,b)\}}
\end{equation}
($N_{t_k}(s,a,b,s')$ is the number of times the tuple $(s,a,b)$ leads to $s'$ in the history $h_{t_k}$), and $\hat{R}_k(s,a,b)$ is the empirical average
reward of the tuple $(s,a,b)$ up to timestep $t_k$. Lemma 17 of \cite{jaksch2010near} shows $\mathbbm{P}(M^* \notin \mathcal{M}_k) \leq 1/K$  for this choice of $\beta_k(s, a,b)$. 
Using this fact along with Lemma \ref{lemma:PS}, we can write
\begin{equation} \label{eq:indicator}
    \mathbb{E}[\mathbbm{1}_{\{M^1_k \notin \mathcal{M}_k\}} ] = \mathbb{E}[\mathbbm{1}_{\{M^* \notin \mathcal{M}_k\}} ] \leq 1/K.
\end{equation}
Since $\tilde{\Delta}^1_k \leq 2H$, we can write:
\begin{align}
    \tilde{\Delta}^1_k \leq &\tilde{\Delta}^1_k \mathbbm{1}_{\{M^1_k, M^* \in \mathcal{M}_k\}}\notag \notag\\
    &  +2H [\mathbbm{1}_{\{M^1_k \notin \mathcal{M}_k\}} + \mathbbm{1}_{\{M^* \notin \mathcal{M}_k\}}]. \label{eq:proof1}
\end{align}
Combining \eqref{eq:proof1} and \eqref{eq:indicator}, we get
\begin{align}
&\sum_{k=1}^{K}\mathbb{E} \left[  \tilde{\Delta}^1_k \right]\leq \sum_{k=1}^{K}\mathbb{E} \left[  \tilde{\Delta}^1_k \mathbbm{1}_{\{M^1_k, M^* \in \mathcal{M}_k\}} \right] + 2H \sum_{k=1}^{K}\frac{2}{K}\notag\\
&\leq \sum_{k=1}^{K}\mathbb{E} \Big[  \mathbb{E} \left[ \tilde{\Delta}_k^1 \big| M^*, M^1_k, M^2_k \right] \mathbbm{1}_{\{M^1_k, M^* \in \mathcal{M}_k\}} \Big] + 4H\notag\\
&= \sum_{k=1}^{K} \sum_{h=1}^{H} \mathbb{E}\big[(\mathcal{T}^{1,k}_{\mu_k,\nu_k,h} - \mathcal{T}^*_{\mu_k,\nu_k,h}) 
 V^{1,k}_{\mu_k,\nu_k, h+1}(s_{t_k+h-1}) \notag\\ 
 &\qquad \qquad \qquad\mathbbm{1}_{\{M^1_k, M^* \in \mathcal{M}_k\}}\big] + 4H \label{eq:proof2}
\end{align}
where we used Lemma \ref{lemma:new} in the last equality above. We can further simplify the right hand side of \eqref{eq:proof2} as
\begin{align}
\leq  &\sum_{k=1}^{K} \sum_{h=0}^{H-1} \mathbb{E}\bigg[\Big(\sum_{s'\in\mathcal{S}}|\theta^1_k(s'|s_{t_k+h}, a_{t_k+h}, b_{t_k+h})\notag\\
&\qquad \qquad-\theta^*(s'|s_{t_k+h}, a_{t_k+h}, b_{t_k+h})|\cdot |V^{1,k}_{\mu_k,\nu_k, h+1}(s')|  \notag\\
& +|\overline{R^1_k}(s_{t_k+h}, a_{t_k+h}, b_{t_k+h})-\overline{R^*}(s_{t_k+h}, a_{t_k+h}, b_{t_k+h})|\Big)\notag\\
& \qquad\qquad\qquad\qquad\qquad\qquad\qquad\mathbbm{1}_{\{M^1_k, M^* \in \mathcal{M}_k\}}\bigg]+4H \notag\\
&\leq   \sum_{k=1}^{K} \sum_{h=0}^{H-1} \mathbb{E}\bigg[\Big(H\cdot\|\theta^1_k(\cdot|s_{t_k+h}, a_{t_k+h}, b_{t_k+h})\notag \\
&\qquad \qquad\qquad \qquad -\theta^*(\cdot|s_{t_k+h}, a_{t_k+h}, b_{t_k+h})\|_1\notag \\
& +|\overline{R^1_k}(s_{t_k+h}, a_{t_k+h}, b_{t_k+h})-\overline{R^*}(s_{t_k+h}, a_{t_k+h}, b_{t_k+h})|\Big)\notag\\
&\qquad\qquad\qquad\qquad\qquad\qquad\qquad\mathbbm{1}_{\{M^1_k, M^* \in \mathcal{M}_k\}}\bigg]+4H \notag
\end{align}
\begin{align}
&\leq   \sum_{k=1}^{K} \sum_{h=0}^{H-1} \mathbb{E}\bigg[\Big(H\big(\|\theta^1_k(\cdot|s_{t_k+h}, a_{t_k+h}, b_{t_k+h})\notag \\
&\qquad \qquad\qquad \qquad -\hat{\theta}_k(\cdot|s_{t_k+h}, a_{t_k+h}, b_{t_k+h})\|_1\notag \\
&  +\|\hat{\theta}_k(s_{t_k+h}, a_{t_k+h}, b_{t_k+h})-\theta^*(s_{t_k+h}, a_{t_k+h}, b_{t_k+h})\|_1\big) \notag\\
&  +|\overline{R^1_k}(s_{t_k+h}, a_{t_k+h}, b_{t_k+h})-\hat{R}_k(s_{t_k+h}, a_{t_k+h}, b_{t_k+h})|\notag\\
&  +|\hat{R}_k(s_{t_k+h}, a_{t_k+h}, b_{t_k+h})-\overline{R^*}(s_{t_k+h}, a_{t_k+h}, b_{t_k+h})|\Big)\notag\\
&\qquad\qquad\qquad\qquad\qquad\qquad\qquad\mathbbm{1}_{\{M^1_k, M^* \in \mathcal{M}_k\}}\bigg]+4H \notag\\
&\leq (2H+4)  \sum_{k=1}^{K} \sum_{h=0}^{H-1} \mathbb{E}\bigg[\min\{\beta_k(s_{t_k+h}, a_{t_k+h}, b_{t_k+h}), 1\}\bigg] \notag\\
&\qquad\qquad\qquad\qquad\qquad\qquad\qquad\qquad\qquad\qquad+ 4H \notag \\
&=\mathbb{E}[\Upsilon].\label{eq:upsilonupperbound}
\end{align}
We can use a similar analysis to bound 
$\sum_{k=1}^{K}\mathbb{E} \left[ -\tilde{\Delta}^2_k \right]$ (note that $ -\tilde{\Delta}^2_k \leq 2H)$:
\begin{align}
 &\sum_{k=1}^{K}\mathbb{E} \left[ -\tilde{\Delta}^2_k \right]\leq \sum_{k=1}^{K}\mathbb{E} \left[ - \tilde{\Delta}^2_k \mathbbm{1}_{\{M^2_k, M^* \in \mathcal{M}_k\}} \right] + 2H \sum_{k=1}^{K}\frac{2}{K} \notag \\
 &\leq \sum_{k=1}^{K}-\mathbb{E} \Big[  \mathbb{E} \left[ \tilde{\Delta}^2_k \big| \theta^*, \theta^1_k, \theta^2_k \right] \mathbbm{1}_{\{M^2_k, M^* \in \mathcal{M}_k\}} \Big] + 4H\notag\\
&= \sum_{k=1}^{K} \sum_{h=1}^{H} -\mathbb{E}\big[(\mathcal{T}^{2,k}_{\mu_k,\nu_k,h} - \mathcal{T}^*_{\mu_k,\nu_k,h}) \notag
 V^{2,k}_{\mu_k,\nu_k, h+1}(s_{t_k+h-1}) \notag\\
  &\qquad \qquad \qquad\mathbbm{1}_{\{M^2_k, M^* \in \mathcal{M}_k\}}\big] + 4H\notag \\
  &\leq  \sum_{k=1}^{K} \sum_{h=0}^{H-1} \mathbb{E}\bigg[\Big(\sum_{s'\in\mathcal{S}}|\theta^2_k(s'|s_{t_k+h}, a_{t_k+h}, b_{t_k+h})\notag\\
&\qquad \qquad \quad-\theta^*(s'|s_{t_k+h}, a_{t_k+h}, b_{t_k+h})|\cdot |V^{2,k}_{\mu_k,\nu_k, h+1}(s')| \notag \\
&\quad+|\overline{R^2_k}(s_{t_k+h}, a_{t_k+h}, b_{t_k+h})-\overline{R^*}(s_{t_k+h}, a_{t_k+h}, b_{t_k+h})|\Big) \notag\\
&\qquad\qquad \qquad \qquad\qquad \qquad \qquad\qquad \qquad\mathbbm{1}_{\{M^2_k, M^* \in \mathcal{M}_k\}}\bigg] \notag\\
 &\leq (2H+4)  \sum_{k=1}^{K} \sum_{h=0}^{H-1} \mathbb{E}\big[\min\{\beta_k(s_{t_k+h}, a_{t_k+h}, b_{t_k+h}), 1\}\big] \notag\\
 &\qquad\qquad\qquad\qquad\qquad\qquad\qquad\qquad\qquad\qquad+ 4H \notag\\
&=\mathbb{E}[\Upsilon].\label{eq:upsilonlowerbound}
\end{align}
By \eqref{eq:upsilonupperbound}, \eqref{eq:upsilonlowerbound}, and Lemma \ref{lemma:regret2}, we have
\begin{align}
- \mathbb{E}[\Upsilon] \leq  \sum_{k=1}^{K} \mathbb{E} \left[  \tilde{\Delta}^2_k \right] \leq \sum_{k=1}^{K} \mathbb{E} \left[  {\Delta}_k \right]\leq  \sum_{k=1}^{K} \mathbb{E} \left[  \tilde{\Delta}^1_k \right] \leq \mathbb{E}[\Upsilon].
\end{align}
These complete the proof.

\section{Proof of Lemma \ref{lemma:new} }\label{app:B}
For $h \in [H]$, let $\rho_{h,k}\in\Delta_{\mathcal{S}}$ be the probability distribution of $s_{t_k -1+h}$ when policies $\mu_k,\nu_k$ are used. Note that $\rho_{1,k}=\rho$ (the initial state distribution). We have the recursive relation for such distributions:
\[
\rho_{h+1,k}(s')=\mathbb{E}_{a\sim\mu_k(s,h),b\sim\nu_k(s,h)}\sum_{s}\rho_{h,k}(s)\theta^*(s'|s,a,b).
\]
Using the Bellman equation, we get:
\begin{equation}
\mathbb{E} \left[ \tilde{\Delta}_k^1 \Big| M^*, M^1_k,M^2_k\right]=\sum_s\rho(s)(V^{1,k}_{\mu_k,\nu_k,1} - V^*_{\mu_k,\nu_k,1})(s) \label{eq:appb.1}
\end{equation}
The right hand side of \eqref{eq:appb.1} can be expanded as
\begin{equation*}
\begin{aligned}
& \sum_s\rho(s)(\mathcal{T}^{1,k}_{\mu_k,\nu_k,1} V^{1,k}_{\mu_k,\nu_k,2} - \mathcal{T}^*_{\mu_k,\nu_k,1} V^*_{\mu_k,\nu_k,2})(s)\\
&\qquad \qquad \quad+\mathcal{T}^{*}_{\mu_k,\nu_k,1} V^{1,k}_{\mu_k,\nu_k,2}- \mathcal{T}^{*}_{\mu_k,\nu_k,1}V^*_{\mu_k,\nu_k,2}\Big)(s) \\
&=\sum_s\rho(s)\Big(\mathcal{T}^{1,k}_{\mu_k,\nu_k,1}V^{1,k}_{\mu_k,\nu_k,2} -\mathcal{T}^{*}_{\mu_k,\nu_k,1}V^{1,k}_{\mu_k,\nu_k,2} \Big)(s) \\
&\qquad \quad+\sum_s\rho(s)\Big(\mathcal{T}^{*}_{\mu_k,\nu_k,1} V^{1,k}_{\mu_k,\nu_k,2}- \mathcal{T}^{*}_{\mu_k,\nu_k,1}V^*_{\mu_k,\nu_k,2}\Big)(s).
\end{aligned}
\end{equation*}
Expanding the second term above, we have:
\begin{equation*}
\begin{aligned}
&\sum_s\rho(s)\mathcal{T}^{*}_{\mu_k,\nu_k,1} (V^{1,k}_{\mu_k,\nu_k,2}(s)- V^*_{\mu_k,\nu_k,2}(s))\\
&=\mathbb{E}_{a\sim\mu_k(s,1),b\sim\nu_k(s,1)}\sum_s\rho(s)\sum_{s'\in\mathcal{S}}\theta^*(s'|s,a,b)\\
&\qquad \qquad\qquad\qquad\qquad\qquad\qquad(V^{1,k}_{\mu_k,\nu_k,2} - V^*_{\mu_k,\nu_k,2})(s')\\
&=\sum_{s'}\left(\mathbb{E}_{a\sim\mu_k(s,1),b\sim\nu_k(s,1)}\sum_s\rho(s)\theta^*(s'|s,a,b)\right)\\
&\qquad\qquad\qquad\qquad\qquad\qquad\qquad(V^{1,k}_{\mu_k,\nu_k,2} - V^*_{\mu_k,\nu_k,2})(s') \\
&=  \sum_{s'}\rho_{2,k}(s') (V^{1,k}_{\mu_k,\nu_k,2} - V^*_{\mu_k,\nu_k,2})(s').\\
\end{aligned}
\end{equation*}
The expression above is similar to \eqref{eq:appb.1} and we can expand it in using similar steps. Doing this recursively, we get 
\begin{equation*}
\begin{aligned}
&\sum_{h=1}^{H}\sum_s\rho_{h,k}(s)(\mathcal{T}^{1,k}_{\mu_k,\nu_k,h} - \mathcal{T}^*_{\mu_k,\nu_k,h} \Big) V^{1,k}_{\mu_k,\nu_k, h+1}(s)\\
&=\sum_{h=1}^{H}\mathbb{E}_{s_{t_k -1+h}\sim\rho_{h,k}}\Big( \mathcal{T}^{1,k}_{\mu_k,\nu_k,h} - \mathcal{T}^*_{\mu_k,\nu_k,h} \Big)\\
 &\qquad \qquad \qquad \qquad \qquad \qquad \qquad V^{1,k}_{\mu_k,\nu_k, h+1}(s_{t_k -1 + h}),
\end{aligned}
\end{equation*}
which is our desired result. The same argument also holds for $\mathbb{E} \left[ \tilde{\Delta}_k^2 \Big| \theta^*, \theta^1_k,\theta^2_k\right]$.
\bibliographystyle{ieeetr}
\bibliography{references_2025}

\end{document}